
\documentclass{article}

\usepackage{microtype}
\usepackage{graphicx}
\usepackage{subfigure}
\usepackage{booktabs} 
\usepackage{amsthm}
\usepackage{bbm}
\usepackage{enumitem}
\usepackage{booktabs} 
\usepackage{amsmath}
\usepackage{bm}
\usepackage{amsfonts}       
\usepackage{nicefrac}       
\usepackage{microtype}      
\usepackage{color}
\usepackage{wrapfig}
\usepackage{makecell}
\usepackage{array}
\usepackage{adjustbox}
\usepackage{url}
\newcolumntype{?}{!{\vrule width 2pt}}
\urlstyle{same}
\usepackage{comment}
\usepackage{caption}
\usepackage[T1]{fontenc}
\usepackage{mathtools}
\usepackage[para]{footmisc}
\usepackage{multirow}
\usepackage{stfloats}
\usepackage{footnote}
\usepackage[linesnumbered,algoruled,boxed,lined]{algorithm2e}

\usepackage{hyperref}



\usepackage[accepted]{icml2020}

\icmltitlerunning{On $\ell_p$-norm Robustness of Ensemble Decision Stumps and Trees}

\usepackage{amsmath}
\usepackage{mathrsfs}
\usepackage{color}
\usepackage{multirow}
\usepackage{threeparttable}
\usepackage{graphicx}

\SetKwProg{Fn}{Function}{}{end}
\SetKwFunction{CliqueEnum}{CliqueEnumerate}
\SetKwFunction{MultiVeri}{MultiLevelVerify}
\SetKwFunction{FRecurs}{ComputeRecursive}%
\SetKwFunction{Go}{ComputeBox}
\SetKwFunction{CheckClique}{CheckClique}
\newtheorem{theorem}{Theorem}

\newtheorem{definition}{Definition}
\newtheorem{proposition}{Proposition}

\allowdisplaybreaks


\usepackage{amsmath,amsfonts,bm}

















\def\ceil#1{\lceil #1 \rceil}

\def\1{\bm{1}}






\def\rmI{{\mathbf{I}}}



\def\vx{{\bm{x}}}

\def\vdelta{{\bm{\delta}}}



\DeclareMathAlphabet{\mathsfit}{\encodingdefault}{\sfdefault}{m}{sl}
\SetMathAlphabet{\mathsfit}{bold}{\encodingdefault}{\sfdefault}{bx}{n}











\newcommand{\R}{\mathbb{R}}



\DeclareMathOperator*{\argmin}{arg\,min}

\begin{document}

\twocolumn[
\icmltitle{On $\ell_p$-norm Robustness of Ensemble Decision Stumps and Trees}



\icmlsetsymbol{equal}{*}

\begin{icmlauthorlist}
\icmlauthor{Yihan Wang}{tsinghua}
\icmlauthor{Huan Zhang}{ucla}
\icmlauthor{Hongge Chen}{mit}
\icmlauthor{Duane Boning}{mit}
\icmlauthor{Cho-Jui Hsieh}{ucla}
\end{icmlauthorlist}

\icmlaffiliation{tsinghua}{ Tsinghua University, Beijing, China}
\icmlaffiliation{ucla}{UCLA, Los Angeles, USA}
\icmlaffiliation{mit}{MIT, Cambridge, USA}

\icmlcorrespondingauthor{Yihan Wang}{wangyihan617@gmail.com}

\icmlkeywords{Machine Learning, ICML}

\vskip 0.3in
]



\printAffiliationsAndNotice 

\begin{abstract}
Recent papers have demonstrated that ensemble stumps and trees could be vulnerable to small input perturbations, so robustness verification and defense for those models have become an important research problem. However, due to the structure of decision trees, where each node makes decision purely based on one feature value, all the previous works only consider the $\ell_\infty$ norm perturbation. To study robustness with respect to a general $\ell_p$ norm perturbation, one has to consider the correlation between perturbations on different features, which has not been handled by previous algorithms. In this paper, we study the problem of robustness verification and certified defense with respect to general $\ell_p$ norm perturbations for ensemble decision stumps and trees. For robustness verification of ensemble stumps, we prove that complete verification is NP-complete for $p\in(0, \infty)$ while polynomial time algorithms exist for $p=0$ or $\infty$. For $p\in(0, \infty)$ we develop an efficient dynamic programming based algorithm for sound verification of ensemble stumps. For ensemble trees, we generalize the previous multi-level robustness verification algorithm to $\ell_p$ norm.
We demonstrate the first certified defense method for training ensemble stumps and trees with respect to $\ell_p$ norm perturbations, and verify its effectiveness empirically on real datasets.
\end{abstract}

\section{Introduction}

It has been observed that small human-imperceptible perturbations can mislead a well-trained deep neural network~\citep{goodfellow2014explaining,szegedy2013intriguing}, which leads to extensive studies on robustness of deep neural network models. In addition to strong attack methods that can find adversarial perturbations in both white-box~\citep{carlini2017towards,madry2017towards,chen2018attacking,zhang2019limitations,xu2019topology} and black-box settings~\citep{chen2017zoo,ilyas2017query,brendel2017decision,cheng2019query,cheng2020signopt}, various algorithms have been proposed for formal robustness verification~\citep{katz2017reluplex,gehr2018ai2,zhang2018efficient,weng2018towards,zhang2018recurjac,wang2018efficient} and improving the robustness of neural networks~\citep{madry2017towards,kolter2017provable,wong2018scaling,zhang2019theoretically,zhang2019towards}.

In this paper, we consider the robustness of ensemble decision trees and stumps. Although tree based model ensembles, including Gradient Boosting Trees (GBDT)~\citep{friedman2001greedy} and random forest, have been widely used in practice, their robustness properties have not been fully understood. Recently, \citet{cheng2019query,chen2019tree,kantchelian2016evasion} showed that adversarial examples also exist in ensemble trees, and several recent works considered the problem of robustness verification~\cite{chen2019robustness,ranzato2019robustness,ranzato2020abstract,tornblom2019abstraction} and adversarial defense~\cite{chen2019tree,andriushchenko2019provably,wang2019tree,calzavara2019treant,calzavara2020feature,chen2019training} for ensemble trees and stumps. However, most of these works focus on evaluating and enhancing the robustness for $\ell_\infty$ norm perturbations, while $\ell_p$ norm perturbations with $p<\infty$ were not considered. Since each node or each stump makes decision by looking at only a single feature, 
the perturbations are independent across features in $\ell_\infty$ robustness verification and defense for tree ensembles, which makes the problem intrinsically simpler than the other $\ell_p$ norm cases with $p<\infty$. 
In fact, we will show that in some cases verifying $\ell_p$ norm and $\ell_\infty$ norm belong to different complexity classes -- verifying $\ell_p$ norm robustness of an ensemble decision stump is NP-complete for $p\in (0, \infty)$ while polynomial time algorithms exist for $p=0, \infty$. 

In practice, robustness on a single $\ell_\infty$ norm is not sufficient -- it has been demonstrated that an $\ell_\infty$ robust model can still be vulnerable to invisible adversarial perturbations in other $\ell_p$ norms~\citep{schott2018towards,tramer2019adversarial}. Additionally, there are cases where an $\ell_p$ norm threat model is more suitable than $\ell_\infty$ norm. For instance, when the perturbation can be made only to few features, it should be modeled as an $\ell_0$ norm perturbation. Thus, it is crucial to have robustness verification and defense algorithms that can work for general $\ell_p$ norms. In this paper, We give a comprehensive study of this problem for tree based models.
Our contribution can be summarized as follows: 
\begin{itemize}[wide]
    \item  In the first part of paper, we consider the problem of verifying $\ell_p$ norm robustness of tree and stump ensembles. 
    For a  single decision tree, similar to the $\ell_\infty$ norm case, we show that 
    the problem of complete robustness verification of $\ell_p$ norm robustness can be done in linear time.  However, for ensemble decision stump, although complete $\ell_\infty$ norm verification can be done in polynomial time, it's NP-complete for verifying $\ell_p$ norm robustness when $p\in(0, \infty)$. We then provide an efficient algorithm to conduct sound but incomplete verification  by dynamic programming. For tree ensembles, the $\ell_p$ case is  NP-complete for any $p$ and we propose an efficient algorithm for computing a reasonably tight lower bound.
    Table~\ref{tab:summary} the algorithms proposed in our paper and previous works, as well as their complexity. 
    \item 
    Based on the proposed robustness verification algorithms, we develop training algorithms for ensemble stumps and trees that can improve certified robust test errors with respect to general $\ell_p$ norm perturbations. Experiments on multiple datasets verify that the proposed methods can improve $\ell_p$ norm robustness where the previous $\ell_\infty$ norm certified defense~\citep{andriushchenko2019provably} cannot. 
\end{itemize}
The rest of the paper is organized as follows. In Section \ref{sec:background}, we introduce the robustness verification and certified defense problems. In Section \ref{sec:verification}, we discuss complexity and algorithms for $\ell_p$ norm robustness verification for ensemble stumps and trees. In Section \ref{sec:defense}, we show how to use our proposed verification algorithms to train ensemble stumps and trees with certified $\ell_p$ norm robustness. Experiments on multiple datasets are conducted in Section \ref{sec:exp}.

\begin{table*}
\caption{Summary of the algorithms and their complexity for robustness verification of ensemble trees and stumps. Blue cells are the contribution of this paper. }
\resizebox{1\textwidth}{!}{
\begin{tabular}{|c|c|c|c|c|}
\hline
& Verification method & $\ell_\infty$ & $\ell_0$ & $\ell_p, p\in (0, \infty)$ \\
\hline
Single Tree & complete &  Linear \cite{chen2019robustness} & {\color{blue}Linear (Sec 3.1)} & {\color{blue}Linear (Sec 3.1) }\\
\hline
\multirow{2}{*}{Ensemble Stump} & complete & Polynomial~\cite{andriushchenko2019provably} & {\color{blue}Linearithmic (Sec 3.2)} &
{\color{blue}NP-complete (Sec 3.2)}\\
\cline{2-5}
& incomplete & Not needed & {\color{blue}Not needed} & {\color{blue}Approximate Knapsack (Sec 3.2) } \\
\hline
\multirow{2}{*}{Ensemble Tree} & complete & \multicolumn{3}{c|}{NP-complete~\cite{kantchelian2016evasion}} \\
\cline{2-5}
& incomplete &Multi-level~\cite{chen2019robustness} & \multicolumn{2}{c|}{{\color{blue}Extended Multi-level (Sec 3.3)}} \\
\hline
\end{tabular}
}
\label{tab:summary}
\end{table*}

\section{Background and Related Work}
\label{sec:background}
\paragraph{Background} Assume $F: \R^d \rightarrow \{1, \dots, C\}$ is a $C$-way classification model, given a correctly classified example $\vx_0$ with $F(\vx_0) = y_0$, an adversarial perturbation is defined as $\vdelta\in \R^d$ such that $F(\vx_0+\vdelta) \neq y_0$. 
\begin{definition}[Robustness Verification Problem]
Given $F, \vx_0$ and a perturbation radius $\epsilon$, the robustness verification problem aims to determine whether 
there exists an adversarial example within $\epsilon$ ball around $\vx_0$. Formally, we determine whether 
the following statement is true:  

\begin{equation}
F(\vx_0+\vdelta) = y_0, \ \ \forall \|\vdelta\|_p \leq \epsilon.
    \label{eq:robustness}
\end{equation}
\end{definition}
%
%
Giving the exact ``yes/no'' answer to~\eqref{eq:robustness} is NP-complete for neural networks~\citep{katz2017reluplex} and tree ensembles~\citep{kantchelian2016evasion}. {\bf Adversarial attack} algorithms are developed to find an adverarial perturbation $\vdelta$ that satisfies \eqref{eq:robustness}. 
For example, several widely used attacks have been developed for attacking neural networks~\citep{carlini2017towards,madry2017towards,goodfellow2014explaining} and other general classifiers~\citep{cheng2018queryefficient,chen2019boundary}. 
However, adversarial attacks can only find adversarial examples which do not provide a {\bf sound} safety guarantee --- even if an attack fails to find an adversarial example, it does not imply no adversarial example exists.

Robustness verification algorithms aim to find a {\bf sound} solution to \eqref{eq:robustness} --- they output yes for a subset of yes instances of \eqref{eq:robustness}.  However they may not be {\bf complete}, in the sense that it may not be able to answer yes for all the yes instances of  \eqref{eq:robustness}. 
Therefore we will refer solving \eqref{eq:robustness} exactly as the ``complete verification problem'', while in general a verification algorithm can be incomplete\footnote{In some works, incomplete verification is referred to as ``approximate'' verification where the goal is to guarantee a lower bound for the norm of the minimum adversarial example, or ``relaxed'' verification emphasizing the relaxation techniques used to solving an optimization problem related to~\eqref{eq:robustness}.} (providing a sound but incomplete solution to \eqref{eq:robustness}). 
Below we will review existing works on verification and their connections to certified defense. 

\paragraph{Robustness verification}
For neural network, it has been shown complete verification is NP-complete for ReLU networks, so many recent works have been focusing on developing efficient (but incomplete)
 robustness verification algorithms~\citep{kolter2017provable,zhang2018efficient,weng2018towards,singh2018fast,wang2018efficient,singh2019abstract,dvijotham2018dual}. Many of them follow the linear or convex relaxation based approach~\citep{salman2019convex}, where~\eqref{eq:robustness} is solved as an optimization problem with relaxed constraints.  However, since ensemble trees are discrete step functions, none of these neural network verification algorithms can be effectively applied.

Specialized algorithms are required for verifying tree ensembles. 
\citet{kantchelian2016evasion} first  showed that complete verification for ensemble tree is NP-complete when there are multiple trees with depth $\geq 2$. An integer programming method was proposed for complete verification which requires exponential time. 
Later on, a single decision tree is verified for evaluating robustness of an RL policy in \citep{bastani2018verifiable}.
 More recently, \citet{chen2019robustness} gave a comprehensive study on the robustness of tree ensemble models;~\citet{ranzato2020abstract} and~\citet{ranzato2019robustness} proposed a tree ensemble robustness and stability verification method based on abstract interpretation; and ~\citet{tornblom2019abstraction} introduced an abstraction-refinement procedure which iteratively refines a partition of the input space.
However, all these previous works only consider $\ell_\infty$ perturbation model (i.e., setting the norm to be $\|\vdelta\|_\infty$ in \eqref{eq:robustness}). The $\ell_\infty$ norm assumption makes verification much easier on decision trees and stumps as perturbations can be considered independently across features, aligning with the decision procedure of tree based models.
 
\paragraph{Certified Defense} Many approaches have been proposed to improve the robustness of a classifier, however evaluating a defense method is often tricky. Many works evaluate model robustness based on {\bf empirical robust accuracy}, defined as the percentage of correctly classified samples under a specific set of attacks within a predefined threat model (e.g., an $\ell_p$ $\epsilon$-ball)~\citep{madry2017towards,chen2019tree}. However, using such measurement can lead to a false sense of robustness~\cite{athalye2018obfuscated}, since robustness against a specific kind of attack doesn't give a sound solution to \eqref{eq:robustness}. 
In fact, many proposed empirical defense algorithms were broken under more sophisticated attacks~\cite{athalye2018obfuscated,tramer2020adaptive}. Instead, certified adversarial defense algorithms evaluate the classifier based on {\bf certified robust accuracy}, defined as the percentage of correctly classified samples for which the robustness can be verified within the $\epsilon$ ball. 
Most of the certified defense algorithms are based on finding the weights to minimize the certified robust loss measured by some robustness verification algorithms~\cite{kolter2017provable,wong2018scaling,wang2018mixtrain,mirman2018differentiable,zhang2019towards}.

Several recent works studied robust tree based models. In \cite{chen2019tree}, an adversarial training approach is proposed to improve $\ell_\infty$ norm robustness of random forest and GBDT. \citet{wang2019tree} proposed another empirical defense also for $\ell_\infty$ norm robustness. The only certified defense that can provide provable robustness guarantees is given in \cite{andriushchenko2019provably}, where they 
proposed a boosting algorithm to improve the {\bf certified robust error} of ensemble trees and stumps with respect to $\ell_\infty$ norm perturbation. This method cannot be directly extended to $\ell_p$ norm perturbations since it relies on independence between features: when one feature is perturbed, the perturbations of other features are irrelevant.

\section{$\ell_p$-norm Robustness Verification of Stumps and Trees}
\label{sec:verification}

The robustness verification problem for ensemble trees and stumps requires us to solve~\eqref{eq:robustness} given a model $F(\cdot)$.  For some of the cases, we will show that computing~\eqref{eq:robustness} exactly (complete robustness verification) is NP-complete, so in those cases we will propose efficient polynomial time algorithms for computing a sound but incomplete solution to the robustness verification problem.  

\paragraph{Summary of our results}
For a single decision tree, \citet{chen2019robustness} shows that $\ell_\infty$ robustness can be evaluated in linear time. We show that their algorithm can be extended to the $\ell_p$ norm case for $p\in[0,\infty]$. Furthermore, we can also extend the multi-level $\ell_{\infty}$ verification framework~\citep{chen2019robustness} for tree ensembles to general $\ell_p$ cases, allowing efficient and sound verification for general $\ell_p$ norm. For evaluating the robustness of an ensemble decision stump, \citet{andriushchenko2019provably} showed that the $\ell_\infty$ case can be solved in polynomial time, but their algorithm uses the fact that features are uncorrelated under $\ell_\infty$ norm perturbations so cannot be used for any $p<\infty$ case. We prove that the $\ell_0$ norm robustness evaluation can be done in linear time, while for the $\ell_p$ norm case with $p\in (0, \infty)$, the robustness verification problem is   NP-complete. We then propose an efficient dynamic programming algorithm to obtain a good lower bound for verification. 


\subsection{A single decision tree}
\label{sec:single_tree}
We first consider the simple case of a single decision tree. Assume the decision tree has $n$ leaf nodes and for a given example $x$ with $d$ features, starting from the root, $x$ traverses the intermediate tree
levels until reaching a leaf node. Each internal node $i$ determines whether $x$ will be passed to left or right child by checking $\rmI(x_{t_i} > \eta_i)$, where $t_i$ is the feature to spilt at in node $i$ and $\eta_i$ is the threshold. Each leaf node $v_i$ has a value $v_i$ indicating the prediction value of the tree. 

If we define $B^i$ as the set of input $x$ that can reach leaf node $i$, due to the decision tree structure, $B^i$ can be represented as a $d$-dimensional box: 
\begin{equation}
    B^i=(l^i_1, r^i_1] \times \cdots \times (l^i_d, r^i_d]. 
\end{equation}
Some of the $l, r$ can be $-\infty$ or $+\infty$. As discussed in Section 3.1 of \citep{chen2019robustness}, the box can be computed efficiently in linear time by traversing the tree. To certify whether there exists any misclassified points under perturbation  $\|\delta\|_p \leq \epsilon$, we can enumerate boxes for all $n$ leaf nodes and check the minimum distance from $\vx_0$ to each box. The following proposition  shows that the $\ell_p$  norm distance between a point and a box can be computed in $O(d)$ time, and thus the complete robustness verification problem for a single tree can be solved in $O(dn)$ time. 
\begin{proposition}
\label{thm:point_box_dist}
Given a box $B=(l_1, r_1]\times \cdots  \times (l_d, r_d]$ and a point $\vx \in \mathbb{R}^d$. The minimum $\ell_p$ distance ($p \in [0, \infty]$) from $x$ to $B$ is $\|z - x\|_p$ where:
\begin{equation}
z_i = \begin{cases}
x_i, & l_i \leq x_i \leq u_i\\
l_i, & x_i < l_i \\
u_i, & x_i > u_i.
\end{cases}
\label{eq:shortest_box}
\end{equation}
\end{proposition}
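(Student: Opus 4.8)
The plan is to exploit the fact that for every $p \in [0,\infty]$ the $\ell_p$ distance is a coordinate-wise separable and \emph{monotone} function of the per-feature deviations, so that the projection onto the box $B$ decouples into $d$ independent one-dimensional projections onto the intervals $(l_j, r_j]$.

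First I would write the objective as $\|z - \vx\|_p = g(a_1, \dots, a_d)$ with $a_j := |z_j - x_j| \ge 0$, where $g(a) = (\sum_j a_j^p)^{1/p}$ for $p \in (0,\infty)$, $g(a) = \max_j a_j$ for $p = \infty$, and $g(a) = \sum_j \mathbbm{1}\{a_j > 0\}$ for $p = 0$. The key structural observation is that in all three cases $g$ is non-decreasing in each argument on $\R_{\ge 0}^d$: for $p \in (0,\infty)$ this holds because $t \mapsto t^p$ and $t \mapsto t^{1/p}$ are increasing on $[0,\infty)$ and summation preserves monotonicity, while the $\max$ and counting cases are immediate. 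Because the constraint $z \in B$ is a product constraint (each $z_j$ ranges over $(l_j, r_j]$ independently of the other coordinates), monotonicity lets me lower-bound the objective coordinatewise: for any feasible $z$ we have $a_j \ge m_j := \inf_{w \in (l_j, r_j]} |w - x_j|$, hence $g(a) \ge g(m_1, \dots, m_d)$, and this bound is simultaneously (approached or) attained by choosing each $z_j$ to be the closest feasible value to $x_j$.

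It then remains to solve the scalar problem of projecting $x_j$ onto $(l_j, r_j]$, which I would argue by cases: if $l_j < x_j \le r_j$ the point is already feasible and $m_j = 0$ with minimizer $z_j = x_j$; if $x_j > r_j$ the closest feasible point is the right endpoint $z_j = r_j$ with $m_j = x_j - r_j$; and if $x_j \le l_j$ the deviation $|w - x_j|$ decreases as $w \downarrow l_j$, so the infimum $m_j = l_j - x_j$ is obtained at $z_j = l_j$. Substituting these three cases reproduces exactly the clamping formula~\eqref{eq:shortest_box} (with $u_j$ denoting the upper endpoint $r_j$), and evaluating $g$ at the resulting deviations clearly costs $O(d)$.

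The only point that needs care is the half-open boundary in the branch $x_j \le l_j$: since $l_j \notin (l_j, r_j]$, the value $z_j = l_j$ realizes an infimum that is approached but not attained inside $B$, so the stated formula gives the infimal $\ell_p$ distance. I would remark that this is harmless for the verification test~\eqref{eq:robustness}, which only compares the distance against $\epsilon$, and that one may equivalently work with the closure $\overline{B}$ without changing the decision. I do not anticipate any genuine difficulty beyond bookkeeping this boundary convention and checking the monotonicity of $g$ in the non-convex regime $p \in (0,1)$, where $\ell_p$ is only a quasi-norm but the monotonicity argument above still applies verbatim.
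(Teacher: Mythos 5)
Your proof is correct and takes essentially the same route as the paper's: both reduce the problem via coordinate-wise separability of the product constraint to $d$ independent one-dimensional projections, each solved by the clamping case analysis. If anything, your version is slightly more careful than the paper's own proof --- your monotonicity-of-$g$ framing handles $p=\infty$ cleanly (the paper's proof writes the $\ell_\infty$ objective as a sum, an apparent typo) and you explicitly flag the half-open boundary subtlety that $z_j = l_j \notin (l_j, r_j]$ yields an infimum rather than an attained minimum, which the paper passes over in silence.
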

We define the operator $\text{dist}_p(B, x)$ to be the minimum $\ell_p$ distance between $x$ to a box $B$. We define the $\ell_p$ norm ball $\text{Ball}_p(x, \epsilon) = \{ x^\prime |  \| x^\prime - x \|_p \leq \epsilon \}$, and we use $\cap$ to denote the intersection between a $\ell_p$ ball and a box. $B \cap \text{Ball}_p(x, \epsilon) \neq \emptyset$ if and only if $\text{dist}_p(B, x) \leq \epsilon$. 
 
%





\subsection{Ensemble decision stumps}
\label{sec:stump_verification}

A decision stump is a decision tree with only one root node and two leaf nodes. We assume there are $T$ decision stumps and the $i$-th  decision stump gives the prediction
\begin{equation*}
    f^i(x) = \begin{cases}
    w^i_l &\text{ if } x_{t_i}<\eta^i \\
    w^i_r &\text{ if } x_{t_i}\geq \eta^i.
    \end{cases}
\end{equation*}
The prediction of a decision stump ensemble $F(x) = \sum_i f^i(x)$ can be decomposed into each feature in the following way. For each feature $j$, assume $j_1, \dots, j_{T_j}$ are the decision stumps using feature $j$, we can collect  all the thresholds $[\eta^{j_1}, \dots, \eta^{j_{T_j}}]$. Without loss of generality, assume $\eta^{j_1} \leq \dots \leq \eta^{j_{T_j}}$ then the prediction values assigned in each interval can be denoted as
\begin{equation}
    g^j(x_j) = v^{j_t}  \ \ \text{ if } \eta^{j_t} < x_j \leq \eta^{j_{t+1}}
    \label{eq:def_gx}
\end{equation}
where 
\begin{equation*}
    v^{j_t} = w_l^{j_1} + \dots + w^{j_t}_l + w^{j_{t+1}}_r + \dots + w^{j_{T_j}}_r,
\end{equation*}
and $x_j$ is the value of sample $x$ on feature $j$. The overall prediction can be written as the summation over the predicted values of each feature: 
\begin{equation}
    F(x) = \sum_{j=1}^d g^j(x_j), 
    \label{eq:separate}
\end{equation}
 and the final prediction is given by $y = sgn(F(x))$.

\paragraph{$\ell_0$ ensemble stump verification}

Assume $F(x)$ is originally positive and we want to make it as small as possible by perturbing $\delta$ features (in this case, $\delta$ should be a positive integer). For each feature $j$, we want to know {\it the maximum decrease of prediction value by  changing this feature,}
which can be computed as
\begin{equation}
    c^j = \min_t v^{j_t} - g^j(x_j),
\end{equation}
and we should choose $\delta$ features with smallest $c^j$ values to perturb. Let $S_\delta$ denotes the set with  $\delta$ smallest $c^j$ values, we have
\begin{equation}
    \min_{\|x-x'\|_0\leq K  } F(x') = F(x) + \sum_{i\in S_K} c^j. 
\end{equation}
Therefore verification can be done exactly in $O(T+d\text{log}(d))$ time, where $O(d\text{log}(d))$ is the cost of sorting $d$ values $\{c^1, ..., c^d\}$.

\paragraph{$\ell_p$ ensemble stump verification} 
The difficulty of $\ell_p$ norm robustness verification is that the perturbations on each feature are correlated, so we can't separate all the features as in \cite{andriushchenko2019provably} for the $\ell_\infty$ norm case. 
In the following, we prove that the complete $\ell_p$ norm verification is NP-complete by showing a reduction from Knapsack to $\ell_p$ norm ensemble stump verification. This shows that $\ell_p$ norm verification can belong to a different complexity class compared to the $\ell_\infty$ norm case. 

\begin{theorem}
Solving $\ell_p$ norm robustness verification (with soundness and completeness) as in Eq.~\eqref{eq:robustness} for an ensemble decision stumps is NP-complete when $p\in (0, \infty)$. 
\end{theorem}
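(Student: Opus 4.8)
The plan is to prove both membership and hardness. For membership, I would observe that the complement of the verification statement in \eqref{eq:robustness} --- the existence of an adversarial example --- lies in NP: a certificate is, for each feature $j$, the index of the interval of $g^j$ into which the perturbed coordinate is moved (equivalently, the target value $v^{j_t}$). Given such a certificate, one computes in $O(d)$ time the resulting prediction $\sum_j v^{j_{t(j)}}$ and, by Proposition~\ref{thm:point_box_dist} applied feature-wise, the minimum $\ell_p$ cost needed to realize these interval choices; checking that this cost is $\le \epsilon$ and that the prediction has flipped sign is polynomial. Hence the complement is in NP, matching the convention of prior work that refers to this verification problem as NP-complete. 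The substance is the NP-hardness, which I would establish by a reduction from $0/1$ Knapsack.

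The key structural observation driving the reduction is that both the constraint and the objective decompose additively over features. Since $\|\vdelta\|_p^p = \sum_{j=1}^d |\delta_j|^p$ and $F(\vx_0+\vdelta) = \sum_{j=1}^d g^j((\vx_0)_j + \delta_j)$ by \eqref{eq:separate}, moving coordinate $j$ into a lower-valued interval of $g^j$ contributes a decrease in the prediction and a cost in the budget $\epsilon^p$, and these contributions sum independently across features. Thus the verification problem is exactly the question of selecting, within a total budget $\epsilon^p$, a collection of per-feature moves whose total prediction decrease suffices to flip the sign of $F$ --- a knapsack-type selection.

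Concretely, given a Knapsack instance with items $i=1,\dots,n$ of integer weights $w_i$ and values $v_i$, capacity $W$ and target $V$, I would build one feature per item, each carrying a single stump. For item $i$ I set the clean coordinate $(\vx_0)_i$ and threshold $\eta_i$ so that $(\vx_0)_i$ lies in the higher-valued region (contributing $0$) and the unique threshold crossing decreases $g^i$ by exactly $v_i$, while placing the target lower region on the closed side $\{x_i \ge \eta_i\}$ of the stump so that the minimizing move is attained with $|(\vx_0)_i-\eta_i|^p = w_i$. I then add one extra unperturbable bias stump whose threshold is farther than $\epsilon$ from $\vx_0$ (so it can never be crossed within budget), contributing the constant $V$, and I set $\epsilon = W^{1/p}$, so that $F(\vx_0) = V > 0$. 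Because each feature carries a single stump it offers only the binary choice ``stay'' (cost $0$, decrease $0$) or ``cross'' (cost $w_i$, decrease $v_i$); hence an adversarial perturbation with $F(\vx_0+\vdelta)\le 0$ exists if and only if some subset $S$ satisfies $\sum_{i\in S} v_i \ge V$ and $\sum_{i\in S} w_i \le W$, i.e.\ the Knapsack instance is a yes-instance. This reduction is polynomial in the input size.

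I expect the main obstacle to be the two technical points that make the gadget well-defined, rather than the combinatorial correspondence itself. First, the half-open intervals of $g^j$ in \eqref{eq:def_gx} mean a naive construction would ask for an infimum that is not attained (a strict inequality); I handle this by orienting each stump so the decreasing move lands in a \emph{closed} region, making the optimal cost $w_i$ exactly achievable. Second, the threshold offsets require $w_i^{1/p}$, which for general $p$ is irrational; since the budget and all costs are additive in the $p$-th power, I would either specify the geometry directly through the $p$-th powers $|(\vx_0)_i-\eta_i|^p=w_i$, or approximate each offset to a number of bits polynomial in the instance size and verify that the integrality of $w_i, v_i, W, V$ leaves a margin absorbing the rounding, so that the equivalence is preserved.
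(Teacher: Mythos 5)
Your proposal is correct and follows essentially the same route as the paper: a reduction from 0--1 Knapsack in which each item becomes one feature carrying a single stump whose threshold sits at $\ell_p$-distance $w_i^{1/p}$ from the clean coordinate, with budget $\epsilon = W^{1/p}$ and a constant offset chosen so the clean prediction equals the Knapsack target, making adversarial perturbations correspond exactly to feasible item subsets. You additionally tidy up points the paper glosses over --- NP membership, attainment at the (closed) threshold rather than an unattained infimum, and the bit-complexity of the irrational offsets $w_i^{1/p}$ --- but these are refinements of the same argument rather than a different approach.
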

\begin{proof}
We show that a 0-1 Knapsack problem can be reduced to an ensemble stump verification problem. 
A 0-1 Knapsack problem can be defined as follows. Assume there are $T$ items each with weight $w_i$ and value $v_i$, the (decision version of) 0-1 Knapsack problem aims to determine whether there exists a subset of items $S$ such that $\sum_{i\in S} w_i \leq C$ and with value 
$\sum_{i\in S}v_i \geq D$. 

Now we construct a decision stump verification problem with $T$ features and $T$ stumps from the 0-1 Knapsack problem, where each decision stump corresponds to one feature. Assume $x$ is the original example, we define each decision stump to be
\begin{equation}
    g^i(s) = -v_i I(s>\eta_i)+\frac{D}{T}, \ \ \ \text{ where } \eta_i = x_i + w_i^{(1/p)}, 
\end{equation}
where $I(\cdot)$ is the indicator function. The goal is to verify $\ell_p$ robustness with $\epsilon = C^{(1/p)}$.
We need to show that this robustness verification problem outputs YES ($\min_{\|x-x'\|_p\leq \epsilon} \sum_i g^i(x'_i) < 0 $) if and only if the Knapsack solution is also YES. 
If the verification found $v^* = \min_{\|x-x'\|_p\leq \epsilon} \sum_i g^i(x'_i)<0$, 
let $x'$ be the corresponding solution of verification, then  we can choose the following $S$ for  0-1 Knapsack: 
\begin{equation}
    S = \{i \mid x_i' > \eta_i\}
\end{equation}
It is guaranteed that 
\begin{equation}
    \sum_{i\in S} w_i = \sum_{i\in S} |\eta_i-x_i|^p \leq \sum_i |x'_i - x_i|^p \leq \epsilon^p = C 
    \end{equation}
    and by the definition of $g^i$ we have $\sum_i g^i(x'_i) = D-  \sum_{i\in S} v_i\leq 0$, so this subset $S$ will also be feasible for the Knapsack problem. On the other hand, if
    the 0-1 Knapsack problem has a solution $S$, 
    for robustness verification problem we can choose $x'$ such that 
 %
    \begin{equation*}
        x'_i = \begin{cases}
        \eta_i & \text{ if }  i\in S \\
        x_i &\text{ otherwise }  
        \end{cases}
    \end{equation*}
    By definition we have $\sum_{i} g^i(x'_i) = D - \sum_{i\in S}v_i <0$. 
    Therefore the Knapsack problem, which is NP-complete, can be reduced to  $\ell_p$ norm  decision stump verification problem with any $p\in (0, \infty)$ in polynomial time. 
\end{proof}


\paragraph{Incomplete Verification for $\ell_p$ robustness}
Although it's impossible to solve $\ell_p$ verification for decision stumps in polynomial time, we show sound verification can be done in polynomial time by dynamic programming, inspired by the pseudo-polynomial time algorithm for Knapsack. 

Let $\eta^{j_1}, \dots, \eta^{j_{T_j}}$ be the thresholds for feature $j$ and $v^{j_1}, \dots, v^{j_{T_j}}$ be the corresponding values, our dynamic programming maintains the following value for each $\epsilon$: ``given maximal $\epsilon$ perturbation to the first $j$ features, what's the minimal prediction of the perturbed $x$''. We denote this value as $D(\epsilon, j)$, then the following recursion holds:
\begin{equation*} 
\setlength\abovedisplayskip{6pt}
    D(\epsilon, j+1) = \min_{\delta \in [0, \epsilon] } D(\epsilon - \delta, j ) + C(\delta, j+1), 
\end{equation*}
where $C(\delta, j+1):= \min_{|x_j' - x_j|<\delta} g^j(x'_j)$ which can be pre-computed. Note that $\delta, \epsilon$ can be real numbers so exactly running this DP requires exponential time. Our approximate algorithm allows $\epsilon, \delta$ only up to certain precision. If we choose precision $\nu$, then 
we only consider values $\nu, 2\nu, \dots, P\nu$ (the smallest $P$ with $P\nu > \epsilon$). 
To ensure the verification algorithm is sound,  
the recursion will become
\begin{equation} 
    \tilde{D}(a\nu, j+1) = \min_{b\in \{1, \dots, a\} } \tilde{D}( (a - b+1)\nu, j ) + C(b\nu, j+1), 
    \label{eq:dp_verification}
\end{equation}
and the final solution should be $\tilde{D}(\ceil{\epsilon}, d)$  where $\ceil{\epsilon}:=P\nu$ means rounding $\epsilon$ up to the closest grid. Note that the $+1$ term in the recursion is to ensure that the resulting value is a lower bound of the original solution. 
The verification algorithm can verify a sample in $O(Pd + T)$ time
, in which $d$ is  dimension and $P$ is the  number of discretizations.


 \subsection{$\ell_p$ norm verification for ensemble decision trees}

\citet{kantchelian2016evasion} showed that for general ensemble trees, complete $\ell_\infty$ robustness verification can formulated as a mixed integer linear programming problem, which is NP-Complete, and \citet{chen2019robustness} proposed a fast polynomial time hierarchical verification framework to verify the model to a desired precision. For a tree ensemble with $T$ trees and an input example $x$, \citet{chen2019robustness} first check all the leaf nodes of each tree and only keep the leaf nodes that $x$ can reach under the given perturbation. In the $\ell_\infty$ case, both the perturbation ball of $x$ and the decision boundary of a leaf node can be represented as boxes (see Sec.~\ref{sec:single_tree}), therefore it is easy to check whether the two boxes have an intersection. 
Then $T$ trees are splited into $\frac{T}{K}$ groups, each with $K$ trees. Trees from different groups are considered independently; the $K$ trees within a group form a graph where each size-$K$ clique in this graph represents a possible prediction value of all trees within this group given $\ell_\infty$ input perturbation. Enumerating all size-$K$ cliques allows us to obtain the worst case prediction of the $K$ trees within a group, and then we can combine the worst case predictions of all $\frac{T}{K}$ groups (e.g., directly adding all of them) to obtain an over-estimated worst case prediction of the entire ensemble. The results can be tightened by considering each group as a ``virtual tree'' and merge virtual trees into a new level of groups.

The most important procedure in~\citep{chen2019robustness} is to check whether a set of leaf nodes from different trees within a group can form a valid size-$K$ clique, which involves checking the intersections among the decision boundaries of leaf nodes from different trees and the intersection among the clique and the perturbation ball. We extend this procedure to $\ell_p$ setting in our work following two steps:

First, we check the intersection between input perturbation $\text{Ball}_p(x, \epsilon)$ and a box $B^i$ using Proposition~\ref{thm:point_box_dist}. Initially, we only consider the set of leaf node that has $\text{dist}_p(B^i, \vx) \leq \epsilon$ ($B^i$ is the decision boundary of a leaf). 

Second, in $\ell_\infty$ case, since the $\ell_\infty$ perturbation ball is also a box, it is possible to use the boxicity property to obtain intersections which are represented as size-$K$ cliques in~\citet{chen2019robustness}.
This boxicity property is not hold anymore for general $\ell_p$ input perturbations. \citet{chen2019robustness} showed that for a set of $\ell_\infty$ boxes $\{B^1, \dots, B^T\}$, if $B^i \cap B^j \neq \emptyset$ for all $i,j$ $(i\neq j)$, and $B^i \cap \text{Ball}_\infty(x, \epsilon) \neq \emptyset$ for all $i$, then it guarantees that $ B^1 \cap B^2 ... \cap B^T \cap \text{Ball}_\infty(x, \epsilon) \neq \emptyset$. However, for $\ell_p$ ($p \neq \infty$) norm perturbation, under the same condition cannot guarantee that $ B^1 \cap B^2 ... \cap B^T \cap \text{Ball}_p(x, \epsilon) \neq \emptyset$. In fact, even if $\text{Ball}_p(x, \epsilon)\cap B^t\neq \emptyset$ for any $t$, $ B^1 \cap B^2 ... \cap B^T \cap \text{Ball}_\infty(x, \epsilon)$ can still be empty. A counter example with $\ell_1$ is shown in Figure~\ref{fig:l1_case} and similar counter examples can be found for any $p<\infty$.
\begin{figure}[ht]
\centering
		\includegraphics[width=0.75\linewidth]{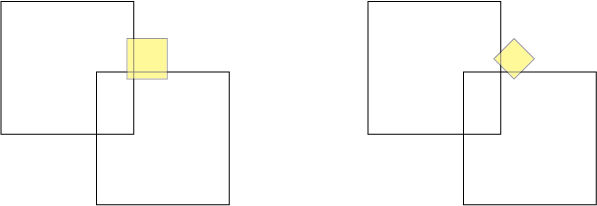}
		\caption{In the $\ell_p$ case, the perturbation ball is not a box and the general $\ell_p$ version of the Lemma 1 in~\citep{chen2019robustness} is not true. Here we present a counter example in $\ell_1$.}
		\label{fig:l1_case}
	\end{figure}

Therefore, we need to check whether $\bar{B} := B^1 \cap \dots \cap B^T$, which is still a box, has nonempty intersection with input perturbation $\text{Ball}_p(x, \epsilon)$. This step can be computed using Proposition \ref{thm:point_box_dist}, which costs $O(d)$ time.
After this additional procedure, we can safely generalize the $\ell_{\infty}$ framework to $\ell_p(p \geq 0)$ cases by simply replacing the procedure. We include the detail algorithm for enumerating the size-$K$ cliques in Appendix \ref{alg:clique_enum}.

\section{Training $\ell_p$-robust Boosted Stumps and Trees}
\label{sec:defense}

Based on the general $\ell_p$ verification algorithm for stump ensembles described in Section \ref{sec:stump_verification}, we develop certified defense algorithms for training ensemble stumps and trees. The main challenge is that 
for $\ell_p (p > 0)$, different from the $\ell_{\infty}$ case, the correlation between features should be considered. 
Following the setting in \citep{andriushchenko2019provably}, we use an exponential loss function $L$, where for a point $(x, y) \in \mathbb{R}^d \times \{-1, 1\}$, $L(y f(x)) = \exp{(-yf(x))}$. 
However, our algorithms can be generalized to other strictly monotonic and convex loss functions. We consider each training example $(x, y) \in \mathbb{S}$ is perturbed in $\text{Ball}_p(x, \epsilon)$, $\mathbb{S}$ is the training set.


\subsection{$\ell_p$ robust boosted stumps}

Given a decision stump ensemble $F(x)=\sum_{i=1}^{T} f_i(x)$ with $T$ stumps, without loss of generality, we assume the first $T-1$ stumps, defined as $F_{T-1}(x)=\sum_{i=1}^{T-1} f_i(x)$, are already trained and fixed, and
our target is to update $F$ with a new stump $f_T(x)$. Here we define a stump as $ f(x) = w_l + \bm{1}_{x_{j} \geq b}w_r$ which splits the space at threshold $b$ on feature $j$ and predict $w_l$ (left leaf prediction) or $w_l + w_r$ (right leaf prediction). Our goal is to select the 4 parameters ($b$, $j$, $w_l$, $w_r$) robustly by minimizing the minimax loss:
\begin{equation}
\setlength\abovedisplayskip{3pt}
\label{eq:robust_objective}
\begin{aligned}
&\min_{j, b, w_l, w_r} \sum_{(x,y) \in \mathbb{S}} \max_{\|\delta\|_p \leq \epsilon}L(y F(x + \delta))
\end{aligned}
\end{equation}
To solve this optimization, we first consider a sub-problem which finds the optimal $w_r^*$ and $w_l^*$ for a fixed split $(j', b')$.
\begin{equation}
\setlength\abovedisplayskip{3pt}
\begin{aligned}
    w_l^*, w_r^* = &\argmin_{w_l, w_r}\!\sum_{(x,y) \in \mathbb{S}}\!\max_{\|\delta\|_p\!\leq \epsilon}\! L(y F(x + \delta))\\
    \text{s.t. }& j = j', b = b'
\end{aligned}
\setlength\belowdisplayskip{3pt}
\label{eq:Lp_stumps_w_optimization}
\end{equation}
For the inner maximization, we note that the loss function is monotonically decreasing, therefore we can replace the maximization as an minimization inside the loss function: 
\begin{align*}
    &\max_{\|\delta\|_p \leq \epsilon}L( y F(x + \delta)) \\
    &=\max_{\|\delta\|_p \leq \epsilon}L \left(y F_{T-1}(x + \delta) + y f_T(x + \delta)\right)\\
    &=L\left(\min_{\|\delta\|_p \leq \epsilon} (y F_{T-1}(x + \delta) + y f_T(x + \delta))\right)\\
    &= 
L\left(\min_{\|\delta\|_p \leq \epsilon} (y F_{T-1}(x + \delta) + y w_l + y w_r \bm{1}_{x_{j'} + \delta_{j'} \geq b'})\right)
\end{align*}
The inner minimization can then be considered as a stump ensemble verification problem. According to Section \ref{sec:stump_verification}, for each $x$, we can derive a lower bound of the inner minimization, denoted as $\Tilde{D}(\ceil{\epsilon}, d)$:
\begin{equation*}
\begin{aligned}
    &\min_{\|\delta\|_p \leq \epsilon} (y F_{T - 1}(x + \delta) + y w_l + y w_r \bm{1}_{x_{j'} + \delta_{j'} \geq b'})\\
    &\geq \Tilde{D}_{(x,y)}(\ceil{\epsilon}, d).
\end{aligned}
\end{equation*}
For simplicity, we omit subscript $(x,y)$ in the analysis below. Our goal is to give $\Tilde{D}(\ceil{\epsilon}, d)$ as a function of $w_l$ and $w_r$. This requires a small extension to the DP based verification algorithm. In~\eqref{eq:dp_verification}, we can consider the $d$ features in any order. We can solve the DP by first solving all other $d-1$ features except $j^\prime$, and obtain $\tilde{D}_{\setminus j^\prime}(a \nu, j)$ for all $a \in \{1, \cdots, P \}$ and $j \in \{1, \cdots. d-1\}$ (we denote the DP table as $\tilde{D}_{\setminus j^\prime}$ to emphasize that it does not include feature $j^\prime$).
$\Tilde{D}_{\backslash j^\prime}(a \nu, d-1)$ is a lower bound of the minimum prediction value under perturbation $a \nu$ excluding all stumps involving feature $j^\prime$.
Then, the recursion for $\Tilde{D}(\ceil{\epsilon}, d)$ needs to consider the minimum of two settings, representing the left or right leaf is selected for the last stump:
\begin{align}
 &\Tilde{D}(\ceil{\epsilon}, d)\!=\! \min\left(\Tilde{D}_L(\ceil{\epsilon}, d), \Tilde{D}_R(\ceil{\epsilon}, d)\right)\nonumber\\
    &\Tilde{D}_L(\ceil{\epsilon},\!d)\!
    =\! \min_{a \in [P]} \left(\Tilde{D}_{\backslash j^\prime}((P\!-\!a\!+\!1)\nu,d\!-\!1)\!+\!C_L(a \nu,j^\prime) \right )\!\nonumber\\
    & \quad\quad\quad\quad\quad +\!y w_l\nonumber\\
    &\Tilde{D}_R(\ceil{\epsilon},d)
= \!\min_{a \in [P]} \left (\Tilde{D}_{\backslash j^\prime}((P\!-\!a\!+\!1)\nu, d\!-\!1)\!+\! C_R(a \nu, j^\prime) \right )\nonumber\\
& \quad\quad\quad\quad\quad +\!y(w_r\!+\!w_l)\nonumber\\
    &C_L(a\nu, j)= \min_{|x_{j} - x_{j}^\prime | \leq a \nu, x_{j}^\prime < b^\prime}g^j(x^\prime)\nonumber\\
    &C_R(a\nu, j)= \min_{|x_{j} - x_{j}^\prime| \leq a \nu, x_{j}' \geq b^\prime}g^j(x^\prime)
    \label{eq:Lp_stumps}
\end{align}
In (\ref{eq:Lp_stumps}), $\Tilde{D}_L(\ceil{\epsilon},\!d)$ and $ \Tilde{D}_R(\ceil{\epsilon},\!d)$ denote the minimum prediction value of the sample $(x, y)$ when perturbed into the left or right side of the split $(j', b')$. $C_L(a \nu, j), C_R(a \nu, j)$ denote the minimum prediction  when $x$ is perturbed into the left or right side of the split on feature $j$ with perturbation $a\nu$,  where $g^j(x)$ is defined as in~\eqref{eq:def_gx} but with the last tree $f_T(x)$ excluded (i.e., computed on $F_{T-1}$).

After obtaining the lower bound of the inner minimization, instead of solving the original optimization (\ref{eq:Lp_stumps_w_optimization}), here we solve
\begin{equation}
    w_l^*, w_r^* = \argmin_{w_l, w_r} \sum_{(x, y) \in \mathcal{S}} L(\Tilde{D}_{(x,y)}(\ceil{\epsilon}, d)).
    \label{eq:Lp_stump_training_optimization}
\end{equation}

\begin{theorem}
$\sum_{(x, y) \in \mathcal{S}} L(\Tilde{D}_{(x,y)}(\ceil{\epsilon}, d))$ defined in \eqref{eq:Lp_stump_training_optimization} is jointly convex in $w_l, w_r$.
\label{thm:convex}
\end{theorem}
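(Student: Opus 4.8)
The plan is to exploit the clean separation between the quantities in the recursion~\eqref{eq:Lp_stumps} that depend on the new leaf weights $(w_l, w_r)$ and those that are fixed. First I would observe that, for a fixed split $(j', b')$ and a fixed ensemble $F_{T-1}$, every ingredient of~\eqref{eq:Lp_stumps} other than the explicit $y w_l$ and $y(w_r + w_l)$ terms is constant in $(w_l, w_r)$: the sub-table $\tilde{D}_{\backslash j'}(\cdot, d-1)$ is assembled from the $d-1$ features excluding $j'$ and therefore never sees the new stump, while $C_L(a\nu, j')$ and $C_R(a\nu, j')$ are evaluated through $g^{j'}$ computed on $F_{T-1}$ (again with the new stump excluded). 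Consequently the two inner minimizations $\alpha_L := \min_{a \in [P]}\bigl(\tilde{D}_{\backslash j'}((P-a+1)\nu, d-1) + C_L(a\nu, j')\bigr)$ and $\alpha_R$ (defined identically with $C_R$) are scalars independent of $(w_l, w_r)$, so that $\tilde{D}_L(\ceil{\epsilon}, d) = \alpha_L + y w_l$ and $\tilde{D}_R(\ceil{\epsilon}, d) = \alpha_R + y(w_r + w_l)$ are \emph{affine} functions of $(w_l, w_r)$, for either sign of $y \in \{-1, 1\}$.

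Second, I would note that $\tilde{D}_{(x,y)}(\ceil{\epsilon}, d) = \min(\tilde{D}_L, \tilde{D}_R)$ is a pointwise minimum of two affine functions of $(w_l, w_r)$, hence \emph{concave}. The final step is the standard composition argument: the exponential loss $L(z) = \exp(-z)$ is convex and non-increasing, and the composition of a convex non-increasing function with a concave function is convex. Concretely, for $\lambda \in [0,1]$, concavity of $\tilde{D}$ gives $\tilde{D}(\lambda u + (1-\lambda) v) \geq \lambda \tilde{D}(u) + (1-\lambda)\tilde{D}(v)$; applying the non-increasing $L$ reverses the inequality, and convexity of $L$ bounds the right-hand side by $\lambda L(\tilde{D}(u)) + (1-\lambda) L(\tilde{D}(v))$, proving $L(\tilde{D}_{(x,y)})$ is convex in $(w_l, w_r)$. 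Since a finite sum of convex functions is convex, summing over $(x,y) \in \mathcal{S}$ yields the claim for the objective of~\eqref{eq:Lp_stump_training_optimization}.

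I do not anticipate a serious obstacle. The only point demanding care is the first step: confirming that $\tilde{D}_{\backslash j'}$ and the cost terms $C_L, C_R$ genuinely contain no dependence on $(w_l, w_r)$, which is precisely why the DP was arranged to solve the remaining $d-1$ features first and to exclude feature $j'$ from $g^{j'}$. Once affineness is secured, concavity of the minimum and the convex/non-increasing composition are routine. I would deliberately avoid any Hessian-based argument, since $\min(\cdot,\cdot)$ is non-smooth; the Jensen-style inequality above is the clean route and sidesteps differentiability entirely.
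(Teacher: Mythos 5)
Your proposal is correct and takes essentially the same route as the paper: both arguments hinge on the observation that the DP-derived quantities are constants once computed, so $\tilde{D}_L$ and $\tilde{D}_R$ are affine in $(w_l, w_r)$, and then handle the minimum via the convexity and monotonicity of $L$ --- the paper rewrites $L(\min(\tilde{D}_L,\tilde{D}_R)) = \max(L(\tilde{D}_L), L(\tilde{D}_R))$ and uses max-of-convex, while you invoke the convex-nonincreasing-composed-with-concave rule, which is the same fact in different clothing. One small point in your favor: the paper's proof describes $L$ as ``monotonically increasing,'' but $L(z)=e^{-z}$ is decreasing, and it is precisely this non-increasing property (which you state correctly) that makes both the min/max swap and your composition argument valid.
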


The proof can be found in the Appendix~\ref{sec:proof}. Based on this theorem, we can use coordinate descent to solve the minimization: fix $w_r$ and minimize over $w_l$, then fix $w_l$ and minimize over $w_r$ (similar to \citet{andriushchenko2019provably}). For exponential loss, when $w_r$ is fixed, we can use a closed form solution to update $w_l$ (see Appendix~\ref{sec:lp_stump_closeform}). When $w_l$ is fixed, we use bisection to get the optimal $w_r$.
For general loss functions, both $w_l$ and $w_r$ can be solved by bisection.

After estimating $w_l^*, w_r^*$ in (\ref{eq:Lp_stumps_w_optimization}), we can iterate over all the possible split positions $(j, b)$ and select the position with minimum robust loss.
Our proposed general $\ell_p$ norm robust training algorithm for stump ensembles can train a new stump in $O(TN(Pd + T) + dBT)$ time, where $B$ is the number of candidate $b$s and $N$ is the size of dataset. For fixed $j$,  $\epsilon$ and precision, $\Tilde{D}_{\setminus j^\prime}(a \nu, d-1)$ is fixed for all $a\in[P]$ and can be pre-calculated, which costs $O(N(Pd + T))$ time. And in implementation, we only need to calculate $T + 1$ different $\Tilde{D}_{\setminus j^\prime}(a \nu, d-1)$, which costs $O(TN(Pd + T))$ time. After obtaining $\Tilde{D}_{\setminus j^\prime}(a \nu, d-1)$, in each iteration, $\Tilde{D}(\ceil{\epsilon}, d)$ can be derived in $O(T)$ time (despite having $P$ discretizations,  there are only $T$ possible values in the minimization in Eq. \eqref{eq:Lp_stumps}, and an efficient implementation can exploit this fact). The bisection searching for $w_l^*$ and $w_r^*$ can also be finished in $O(1)$ time with fixed parameters. Thus the above algorithm can train a stump ensemble in $O(TN(Pd + T) + dBT)$ time.

\subsection{$\ell_p$ robust boosted trees}
\paragraph{Single decision tree}
\label{sec:tree_training}
Our goal is to solve (\ref{eq:robust_objective}) where $F$ is a single tree. Different from the $\ell_\infty$ case, in $\ell_p$ cases, perturbation on one dimension can reduce the possible perturbation on other dimensions. Therefore, when updating a stump ensemble, perturbation bound $\epsilon$ will be consumed along the trajectory from the tree root to leaf nodes. Because the number of features is typically more than the depth of a decision tree, we use each feature only once along one trajectory on the decision tree.
We define $S = \{(x, y) \in \mathbb{S} | \text{dist}_p(x, B^{N_k})\leq \epsilon\}$ 
as the set of samples that can fall into node $N_k$ under $\ell_p$ norm $\epsilon$ bounded perturbation, and $(N_0, N_1, ..., N_{k - 1})$ as the sequence of nodes on the trajectory from tree root $N_0$ to tree node $N_k$. Each node $N_t$ $(0 \leq t < k)$ contains a split $(j_t, b_t)$ which splits the space on feature $j_t$ at value $b_t$.

In the $\ell_p$ norm case, each example has an unique perturbation budget at node $N_k$, as some of the perturbation budget has been consumed in parent nodes splitting other features.
For each sample $(x, y)$, $\ell_p$ norm bounded perturbation in node $N_k$ can be calculated along the trajectory by $\epsilon(x) = (\epsilon^p - \sum_{t \in E} (x_{j_t} - b_t)^p)^{\frac{1}{p}}$, where $E$ is a subset of the node trajectory in which $x$ and $N_{t + 1}$ are on the different sides of node $N_t$, $\forall t \in E$.
Formally, we can define $E$ as $\{t: t < k - 1,  \bm{1}(x_{j_t} \geq b_t) \neq \bm{1}(N_{t + 1} \geq b_t) \}$, where $N_{t + 1} \geq b_t$ denotes that $x'_{j_t} \geq b_t$, $\forall x' \in B^{N_{t + 1}}$. 
This is different from previous works on $\ell_\infty$ perturbations. Now we consider training the node $N_k$ and get the optimal parameters $(j^*, b^*, w_l^*, w_r^*)$:
\begin{equation}
\label{eq:tree_object}
\begin{aligned}
    & j^*, b^*, w_l^*, w_r^* \\
    &= \argmin_{j, b, w_l, w_r} \sum_{(x, y) \in S} \max_{\|\delta\|_p \leq \epsilon(x)} L(f(x + \delta)y),
\end{aligned}
\end{equation}
where $f(\cdot)$ is a new leaf node $f(x) = \rmI(x \geq b) w_r + w_l$, and when training node $N_k$, we only consider the training examples in $S$. The objective in \eqref{eq:tree_object} is similar to that in \eqref{eq:robust_objective} except that there is only one stump to be trained. Therefore, we can use a similar procedure as in previous section to find the optimal parameters.


\paragraph{Boosted decision tree ensemble}
Given a tree ensemble with $T$ trees $F(x) = \sum_{i = 1}^{T} f_i(x)$, we fix the first $T - 1$ trees and train a node $N$ on the $(T)$-th decision tree $f_T(x)$. The optimization problem will be essentially  the same as Eq.~\eqref{eq:tree_object}, but here for $(x, y) \in S$, we should also consider the first $T - 1$ trees, along with prediction of node $N$:
\begin{equation*}
\begin{aligned}
    &\max_{\|\delta\|_p\leq\!\epsilon(x)}\!L(y F_T(x + \delta)) \\
    &=\!\max_{\|\delta\|_p\leq \epsilon(x)}\!L(y F_{T - 1}(x + \delta)\!+\!y(w_l\!+ \!\bm{1}_{x_{j} + \delta_j\!\geq b} w_r))\\
    &=\!L\left(\!\min_{\|\delta\|_p\leq \epsilon(x)} (y F_{T - 1}(x + \delta)\!+\!y(w_l\!+ \!\bm{1}_{x_{j} + \delta_j\!\geq b} w_r))\right).
\end{aligned}
\end{equation*}
Here $F_{T - 1}(x)$ is the prediction from the ensemble of the first $T - 1$ trees. We further lower bound the minimization:
\begin{equation*}
\begin{split}
&\!\min_{\|\delta\|_p\leq \epsilon(x)} (y F_{T - 1}(x + \delta)\!+\!y(w_l\!+ \!\bm{1}_{x_{j} + \delta_j\!\geq b} w_r))\\
&\geq \min_{\|\delta\|_p\leq \epsilon(x)} y F_{T - 1}(x + \delta)\! + \min_{\|\delta\|_p \leq \epsilon(x)} \!y(w_l\!+ \!\bm{1}_{x_{j} + \delta_j\!\geq b} w_r)
\end{split}
\end{equation*}
The first part is the $\ell_p$ robustness verification for tree ensemble, which is challenging to solve efficiently during training time. Here we apply a relatively loose lower bound of $y F_{T-1}(x)$, where
\begin{equation*}
    \sum_{i = 1}^{T - 1} \min_{\|\delta\|_p \leq \epsilon(x)}(y f(x+\delta)) \leq \min_{\|\delta\|_p \leq \epsilon(x)} y F_{T-1}(x)
\end{equation*}

We simply sum up the worst prediction on each previous tree, which can be easily maintained during training. By doing this relaxation, the problem is reduced to building a single tree to boost the $\ell_p$ norm robustness.



\subsection{$\epsilon$ schedule}

When features are correlated in $\ell_p$ cases, we find that it is important to have an $\epsilon$ schedule during the training process -- the $\epsilon$ increases gradually from small to large, instead of using a fixed large $\epsilon$ in the beginning. 
If one directly uses a large $\epsilon$ in the beginning, the first few stumps will allow too much perturbation and the later stumps tend to allow fewer perturbation, making it harder to explore the correlation between features. In ensemble stump training, we increase the $\epsilon$ when training a new stump, and in ensemble tree training, we increase the $\epsilon$ when height of the tree grows. We also include the choice of $\epsilon$ schedules in Appendix \ref{apd:settings}.
 

\vspace{-5pt}
\section{Experimental Results}
\label{sec:exp}

In this section we empirically test the proposed algorithms for $\ell_p$ robustness verification and training. The code is implemented in Python and all the  experiments are conducted on a machine with 2.7 GHz Intel Core i5 CPU with 8G RAM. Our code is publicly available at 
\href{https://github.com/YihanWang617/On-ell_p-Robustness-of-Ensemble-Stumps-and-Trees}{https://github.com/YihanWang617/On-ell\_p-Robustness-of-Ensemble-Stumps-and-Trees}

\begin{table*}[th!]
\begin{center}
\caption{\textbf{General $\ell_p$-norm ensemble stump verification.} This table reports verified test error (verified err.) and average per sample verification time (avg. time) of each method. For our proposed DP based verification, precision is also reported. For $\ell_0$ verification, we report verified errors with $\epsilon_0=1$ (changing 1 pixels). For $\ell_0$ norm, we also report $r^*$, which is the average the number features that can be perturbed at most while the prediction stays the same. 
} 
\scalebox{0.85}{
\setlength\tabcolsep{1.5pt}
\begin{tabular}{c|c?c|c?c|c|c?c|c?c|c|c}
\hline
    
    \multicolumn{2}{c?}{Dataset}&\multicolumn{2}{c?}{$\ell_1$ MILP (complete) ~}&\multicolumn{3}{c?}{ Ours $\ell_1$ DP approx. (incomplete)}&\multicolumn{2}{c?}{Ours vs. MILP}&\multicolumn{3}{c}{Ours $\ell_0$ (complete) verification}\\
    name &$\epsilon_\infty$&verified err.&avg. time &precision& verified err. &avg. time & MILP/ours &speedup & avg. robust $r^*$&verified err. & avg. time\\\hline
    
    breast-cancer &0.3&10.94\%&.030s&0.01&10.94\%&.00025s&1.00&120X&.04&95.62\%&.0006s\\\hline
    diabetes &0.05&35.06\%&.017s&0.0002&35.06\%&.0004s&1.00&40X&.0&100.0\%&.0005s\\\hline
    Fashion-MNIST shoes  &0.1&10.45\%&.105s&0.005&10.55\%&.0013s&.99&80.8X&2.09&16.35\%&.010s\\\hline
    MNIST 1 vs. 5&0.3&3.30\%&0.11s&0.005&3.35\%&0.0013s&1.00&71X&3.33&4.50\%&.010s\\\hline
    
    MNIST 2 vs. 6&0.3&9.64\%&0.099s&0.005&9.69\%&.0012s&.98&82X&1.22&26.43\%&.012s\\
    \hline

\end{tabular}
}
\label{tab:stump_verification}
\end{center}
\end{table*}

\begin{table*}[th!]
\vspace{-3mm}
\begin{center}
\caption{
\textbf{General $\ell_p$-norm  tree ensemble verification.} We report verified test error (verified err.) and average per-example verification time (avg. time) of each method.  $K$: size of cliques; $L$: number of levels in multi-level verification (defined similarly as in~\citep{chen2019robustness}). Our $\ell_p$ incomplete verification can obtain results very close to complete verification (MILP), with huge speedups. 
} 
\scalebox{0.90}{
\setlength\tabcolsep{1.5pt}
\begin{tabular}{c|c?c|c?c|c|c|c?c|c}
\hline
    
    \multicolumn{2}{c?}{Dataset}&\multicolumn{2}{c?}{$\ell_1$ MILP ~}&\multicolumn{4}{c?}{ Ours $\ell_1$ approx.}&\multicolumn{2}{c}{Ours vs. MILP}\\
    name & $\epsilon$ &verified err.&avg. time &K&L& verified err. &avg. time & ratio of verified err. &speedup \\\hline
    
    breast-cancer &0.3&8.03\%&.036s&3&2&8.03\%&.012s&1.00&3X\\\hline
    diabetes &0.05&33.12\%&.027s&3&2&33.12\%&.012s&1.00&2.25X\\\hline
    Fashion-MNIST shoes &0.1&10\%&.091s&3&2&10\%&.011s&1.00&8.23X\\\hline
    MNIST 1 vs. 5 &0.3&4.20\%&0.088s&3&2&4.20\%&.011s&1.00&8X\\\hline
    
    MNIST 2 vs. 6 &0.3&8.60\%&.098s&3&2&8.80\%&.012s&.98&8.17X\\
    \hline

\end{tabular}
}
\label{tab:tree_verification}
\end{center}
\end{table*}


\begin{table*}[h]
\begin{center}
\caption{
\textbf{
$\ell_1$ robust training for stump ensembles.}
We report standard errors and  $\ell_1$ verified errors 
of our training methods ($\ell_1$ training) versus the previous $\ell_\infty$ training algorithm. 
$\epsilon$ is the perturbation bound  for each dataset. For $\epsilon = 1.0$ in mnist dataset, we train the models using $\epsilon_{\infty} = 0.3$.
Our proposed $\ell_1$ training can significantly reduce the $\ell_1$ verified error, and the previous $\ell_\infty$ approach cannot as it was designed to reduce $\ell_\infty$ error only. We conduct a similar experiment for $\ell_2$ norm in Appendix~\ref{apd:ell_2_training}.
} 
\scalebox{0.85}{
\setlength\tabcolsep{1.5pt}
\begin{tabular}{c|c|c|c?c|c?c|c?c|c}
\hline
    
    \multicolumn{4}{c?}{Dataset}&\multicolumn{2}{c?}{standard training ~}&\multicolumn{2}{c?}{$\ell_{\infty}$ training\cite{andriushchenko2019provably}}&\multicolumn{2}{c}{$\ell_1$ training (ours)}\\
    name&$\epsilon_\infty$&$\epsilon_1$&n. stumps&standard err.&verified err. &standard err.& verified err. & standard err. & verified err. \\\hline
    
    breast-cancer & 0.3 &1.0&20&0.73\%&95.62\%&4.37\%&99.27\%&1.46\%&\bf 35.77\%\\\hline
    diabetes & 0.05 &0.05&20&21.43\%&37.66\%&29.2\%&35.06\%&27.27\%&\bf 31.82\%\\\hline
    \multirow{2}*{Fashion-MNIST shoes}& 0.1 & 0.1 &20&6.60\%&69.85\%&7.50\%&10.45\%&7.10\%&\bf 10.35\%\\
    &0.2& 0.5&40& 5.05\% &87.5\% &9.25\% &57.05\% &12.40\% &\bf 32.20\%\\\hline
    \multirow{2}*{MNIST 1 vs. 5} & 0.3 & 0.3 &20&1.23\%&58.76\%&1.68\%&3.30\%&1.28\%&\bf 2.81\%\\
    & 0.3 & 1.0 &40& 0.59\% & 66.01\% & 1.33\% & 17.46\% & 4.49\% & \bf 16.23\% \\\hline
    
    \multirow{2}*{MNIST 2 vs. 6} & 0.3 & 0.3 &20&3.17\%&92.46\%&4.52\%&9.64\%&3.71\%&\bf 8.24\%\\
    & 0.3& 1.0 &40& 2.81\% & 99.49\% & 3.91\% & 44.22\% & 7.73\% & \bf 33.46\% \\
    \hline

\end{tabular}
}
\label{tab:stump_training}
\end{center}
\end{table*}

\begin{table*}[h]
\vspace{-3mm}
\begin{center}
\caption{
\textbf{$\ell_1$ robust training for tree ensembles.} We report standard and $\ell_1$ robust test error for all the three methods. We also report $\epsilon$  for each dataset, and the number of trees in each ensemble. We also report the results of $\ell_2$ robust training for tree ensembles in Appendix \ref{apd:ell_2_training}.}
\scalebox{0.85}{
\setlength\tabcolsep{1.5pt}
\begin{tabular}{c|c|c|c|c?c|c?c|c?c|c}
\hline
    
    \multicolumn{5}{c?}{Dataset}&\multicolumn{2}{c?}{standard training ~}&\multicolumn{2}{c?}{$\ell_{\infty}$ training~\citep{andriushchenko2019provably}}&\multicolumn{2}{c}{$\ell_1$ training (ours)}\\
    name&$\epsilon_\infty$&$\epsilon_1$&n. trees & depth & standard err.&verified err. &standard err.& verified err. & standard err. & verified err. \\\hline
    
    {Fashion-MNIST shoes} 
    &0.2&0.5&5&5&4.65\%&99.85\%&7.85\%&89.54\%&18.71\%&\bf 65.18\%\\\hline
    {breast-cancer} 
    & 0.3&1.0 & 5 & 5 &  0.73\% & 99.26\% & 0.73\% & 99.63\% & 9.56\% & \bf 47.05\% \\\hline
    {MNIST 1 vs. 5} 
    &0.3&0.8&5&5&0.64\%&97.38\%&0.64\%&64.11\%&4.59\%&\bf 36.23\%\\\hline
    
    {MNIST 2 vs. 6} 
    &0.3 & 0.6&5&5 &4.12\%&100.0\%&1.96\%&52.33\%&7.64\%&\bf 39.67\%\\
    \hline

\end{tabular}
}

\label{tab:tree_training}
\vspace{-5pt}
\end{center}
\end{table*}
\vspace{-5pt}
\subsection{$\ell_p$ stump and tree ensemble verification}

\textbf{$\ell_p$ stump ensemble verification }
We evaluate our incomplete $\ell_p$ verification method for stump ensembles on five real datasets. 
Ensembles are robustly trained using the $\ell_{\infty}$ training procedure proposed in \cite{andriushchenko2019provably}, each of which contains 20 stumps. 

For the $\ell_1$ norm robustness verification problem, we have shown it's NP-complete to conduct complete verification. To demonstrate the tightness and efficiency of the proposed Dynamic Programming (DP) based verification,
we also run the Mixed Integer Linear Programming \cite{kantchelian2016evasion} to conduct complete verification, which can take exponential time. In Table \ref{tab:stump_verification}, we can find that the proposed DP algorithm gives almost exactly the same bound as MILP, while being $50-100$ times faster. 
This speedup guarantees its further applications in certified robust training.



For the $\ell_0$ norm robustness verification, we propose a linearithmic time algorithm for complete verification. The results for $\epsilon_0=1$ (changing only 1 feature) are also reported in Table \ref{tab:stump_verification}. We can observe that the proposed method can conduct complete verification in less than $0.1$ second. We find that some models are not robust to $\ell_0$ perturbations with high verified errors. Since our verification method is complete, these models suffer from adversarial examples that change classification outcome by changing only 1 pixel.

\textbf{$\ell_p$ tree ensemble verification}
We evaluate our incomplete $\ell_p$ verification method for tree ensembles on five real datasets. 
Ensemble models being verified are robustly trained with \cite{andriushchenko2019provably}, each of which contains 20 trees. 

Again, we compare our proposed algorithm with MILP-based complete verification~\cite{kantchelian2016evasion} which can take exponential time to get the exact bound. The results are presented in Table~\ref{tab:tree_verification}, and 
parameters of the proposed method ($K$ and $L$) are also reported.
We observe that the proposed verification method gets very tight verified errors while being much faster than the MILP solver. 

\vspace{-5pt}
\subsection{$\ell_p$ robust stump and tree ensemble training}
\textbf{$\ell_p$ robust stump training}
We evaluate our proposed certified training methods on two small size datasets and three medium-size datasets. All the models are trained with standard training, $\ell_{\infty}$ robust training~\cite{andriushchenko2019provably} and our proposed general $\ell_p$ robust training algorithm (in experiments, we set $p = 1$. We also report the $p = 2$ results in Appendix \ref{apd:ell_2_training}).
Models of the same dataset are trained with the same set of hyperparameters 
(details can be found in the Appendix). We evaluate $\ell_1$ verified test error using MILP.
In our experiments, we choose different $\epsilon_\infty$ and $\epsilon_p$ such that the $\ell_\infty$ and $\ell_p$ perturbation balls do not contain each other. 
Standard error and verified robust test error of each model are reported in Table \ref{tab:stump_training}. We also report $\ell_\infty$ robustness of these models in Appendix \ref{apd:ell_infty_robustness}. We observe that the proposed training method can successfully get a more robust model against $\ell_1$ perturbation compared to the previous $\ell_\infty$-norm only training method. 


\textbf{$\ell_p$ robust tree training}
We evaluate our $\ell_p$ robust training method for trees on subsets of three medium size datasets (dataset statistics can be found in the Appendix). 
We report the results of $\ell_1$ robust training tree ensembles in Tables \ref{tab:tree_training}, and results of $\ell_2$ robust training in Appendix \ref{apd:ell_2_training}. It shows that our algorithm achieves better or at least comparable verified error in most cases. 
\begin{figure}[htb]
    \centering
    \includegraphics[width=6cm]{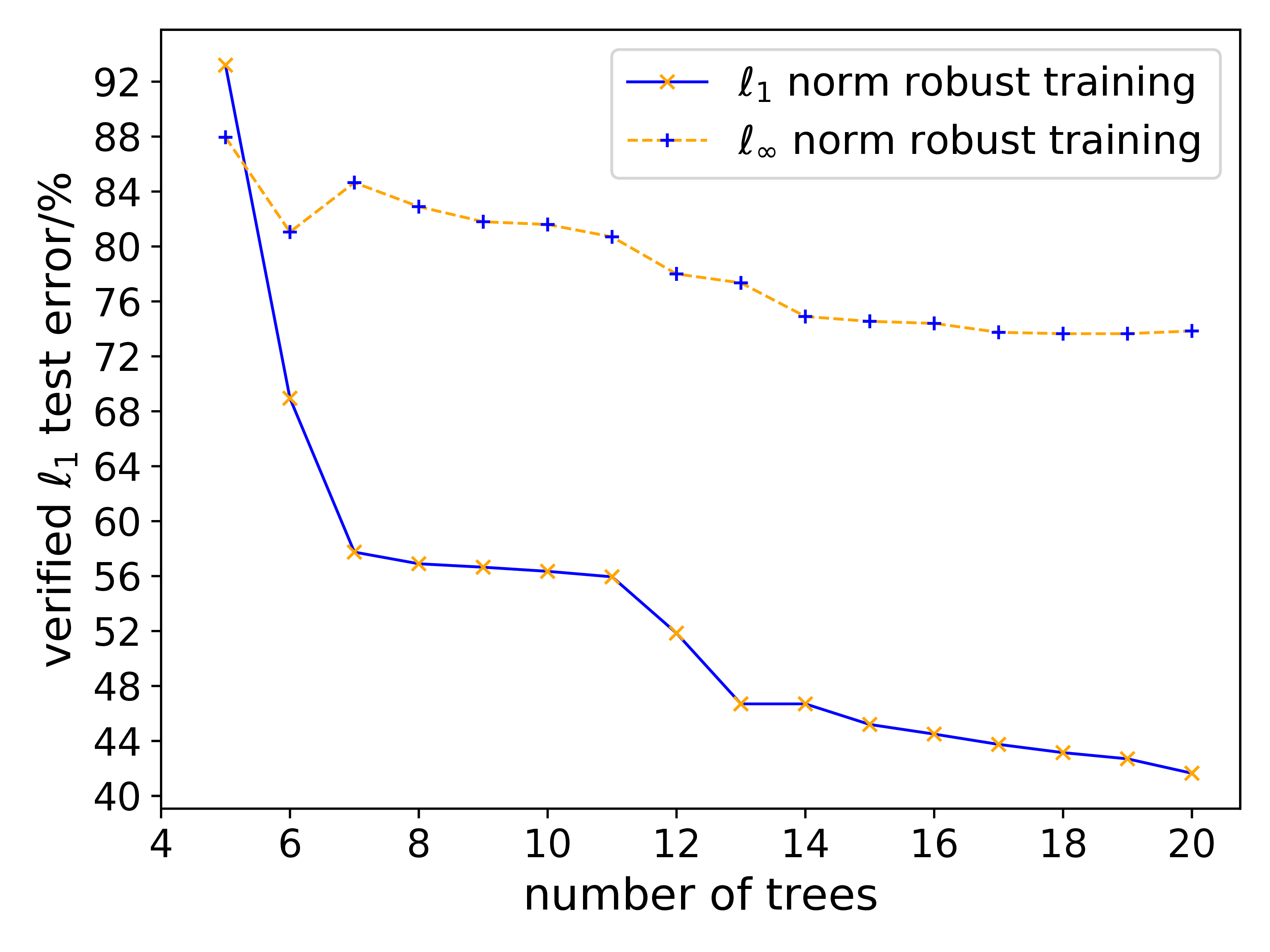}
    \vspace{-8pt}
    \caption{\textbf{$\ell_1$ and $\ell_{\infty}$ robust training on fashion-mnist dataset ($\epsilon_\infty = 0.2$ and $ \epsilon_1 = 0.5$).} We compare verified errors during training when the number of stumps increases.
    }
    \label{fig:robust_error_comparison}
    \vspace{-3mm}
\end{figure}
In addition, we also conduct an example to test the performance of certified training with respect to number of trees. In Figure \ref{fig:robust_error_comparison}, we compare $\ell_\infty$ and $\ell_1$ robust training on fashion-mnist dataset and monitor the performance over the first 20 stumps (the $\epsilon$ scheduling length is 5). We can observe that when number of stumps increases, 
the our $\ell_1$ robust training can indeed gradually reduce $\ell_1$ verified test error, where the $\ell_\infty$ robust training (as a reference) can only slightly improve $\ell_1$ robustness.


\section{Conclusion}
In this paper, we first develop methods to efficiently verify the general $\ell_p$ norm robustness for tree-based ensemble models. Based on our proposed efficient verification algorithms proposed, we further derive the first $\ell_p$ norm certified robust training algorithms for ensemble stumps and trees. 
\clearpage
\section*{Acknowledgement}
We acknowledge Maksym Andriushchenko and Matthias Hein for providing their $\ell_\infty$ certified training code. 
This work is partially supported by NSF IIS-1719097, Intel, Google cloud and Facebook. Huan Zhang is supported by the IBM fellowship. 



\bibliography{paper}
\bibliographystyle{icml2020}

\newpage
\onecolumn
\appendix

\section{Proof of Proposition \ref{thm:point_box_dist}}

\textbf{Proposition 1.} Given a box $B=(l_1, r_1]\times \cdots  \times (l_d, r_d]$ and a point $\vx \in \mathbb{R}^d$. The closest $\ell_p$ distance ($p \in [0, \infty]$) from $x$ to $B$ is $\|z - x\|_p$ where:
\begin{equation*}
z_i = \begin{cases}
x_i, & l_i \leq x_i \leq u_i\\
l_i, & x_i < l_i \\
u_i, & x_i > u_i.
\end{cases}
\end{equation*}

\begin{proof}
For $p > 0$, The goal is to minimize the following objective:
\[
\begin{split}
\min_{z} \|z - x\|_p^p &= \min_{z} \sum_{i=1}^d |z_i - x_i|^p \\
\text{s.t. } & l_i < z_i \leq r_i, \enskip\forall i \in [d].
\end{split}
\]
And for $p = 0$, the objective is 
\begin{equation*}
\begin{split}
\min_{z} \|z - x\|_0 &= \min_{z} \sum_{i=1}^d \rmI(z_i \neq x_i) \\
\text{s.t. } & l_i < z_i \leq r_i, \enskip\forall i \in [d].
\end{split}
\end{equation*}
where $\rmI(\cdot)$ is an indicator function. For $p = \infty$, the objective is 
\begin{equation*}
\begin{split}
\min_{z} \|z - x\|_\infty &= \min_{z} \sum_{i=1}^d | z_i - x_i| \\
\text{s.t. } & l_i < z_i \leq r_i, \enskip\forall i \in [d].
\end{split}
\end{equation*}
Since each term in the summation is separable, we can consider minimizing each term in the summation signs separately. Given the constraints on $z_i$, the minimum is achieved at the condition specified in Eq.~\eqref{eq:shortest_box} regardless of the choice of $p$:
\begin{equation*}
z_i = \begin{cases}
x_i, & l_i \leq x_i \leq u_i\\
l_i, & x_i < l_i \\
u_i, & x_i > u_i.
\end{cases}
\end{equation*}
\end{proof}

\section{Closed form update rule for $\ell_p$ Stump Ensemble Training}
\label{sec:lp_stump_closeform}
For exponential loss we can rewrite eq \eqref{eq:Lp_stump_training_optimization} as 
\begin{equation*}
\begin{aligned}
    \sum_{i = 1}^{N - 1} L(\Tilde{D}(\ceil{\epsilon}, d)) &= \sum_{i = 1}^{N - 1} \gamma_i \exp(-y_i w_l)\\
    &=\sum_{y_i = 1}\gamma_i \exp(-w_l)\!+\!\sum_{y_i = -1} \gamma_i \exp(w_l)\\
\end{aligned}
\end{equation*}
where 
\begin{equation*}
    \gamma_i = L(\Tilde{D}(\ceil{\epsilon}, d) - y_i w_l)
\end{equation*}
which is fixed with a fixed $w_r$.

And we can further derive the optimal $w_l$ at each update step
\begin{equation*}
\begin{aligned}
    \sum_{y_i = 1} \gamma_i (-\exp(-w_l^*)) &+ \sum_{y_i = -1} \gamma_i \exp(w_l^*) = 0\\
    \sum_{y_i = 1} \gamma_i \exp(-w_l^*) &= \sum_{y_i = -1} \gamma_i \exp(w_l^*)\\
    w_l^* &= \ln{\frac{\sum_{y_i = 1} \gamma_i}{\sum_{y_i = -1} \gamma_i}}/2.
\end{aligned}
\end{equation*}

\section{Robustness verification for ensemble trees}
In this section, we provide the detail algorithm of robustness verification for ensemble trees. This algorithm is based on the robustness verification framework in \cite{chen2019robustness}. In Algorithm \ref{alg:clique_enum}, we describe the modified function \texttt{CliqueEnumerate}, which is the key procedure of this framework. The main difference is that after we form the initial set of cliques, we will recheck whether the formed cliques have intersection with the $\ell_p$ perturbation ball (line 18 to 22).

\begin{algorithm*}[htb]
\SetKwInOut{Input}{input}
\Input{$V_1,\ V_2,\ ,\dots,\ V_K$ are the $K$ independent sets (``parts'') of a $K$-partite graph; the graph is defined similarly as in \citet{chen2019robustness}.}
\For{$k\leftarrow 1,\ 2,\ 3,\ \dots,\ K$}{
$U_k\leftarrow \{(A_i,\ B^{i^{(k)}})|i^{(k)}\in V_k,\ A_i=\{i^{(k)}\}\}$\;
\tcc{\small{$U$ is a set of tuples $(A,B)$, which stores a set of cliques and their corresponding boxes. $A$ is the set of nodes in one clique and $B$ is the corresponding box of this clique. Initially, each node in $V_k$ forms a 1-clique itself.}}
}
\CliqueEnum{$U_1,\ U_2,\ ,\dots,\ U_K$}\;

\Fn{\CliqueEnum{$U_1,\ U_2,\ ,\dots,\ U_K$}}{
$\hat{U}_\text{old}\leftarrow U_1$\;

\For{$k\leftarrow 2,\ 3,\ \dots,\ K$}{
$\hat{U}_\text{new}\leftarrow \emptyset$\;

\For{$(\hat{A},\ \hat{B})\in \hat{U}_\text{old}$}{

\For{$(A,\ B)\in U_{k}$}{
\uIf{$B\cap \hat{B}\neq\emptyset$}{
\tcc{\small{A $k$-clique is found; add it as a pseudo node with the intersection of two boxes.}}
$\hat{U}_\text{new}\leftarrow \hat{U}_\text{new}\cup\{(A\cup\hat{A},\ B\cap \hat{B})\}$\;
}
}
}
$\hat{U}_\text{old}\leftarrow \hat{U}_\text{new}$\;
}
$\hat{U} \leftarrow \emptyset$

\For{$(A, B) \in \hat{U}_\text{new}$}{
    \uIf{\CheckClique{$B, d, p, \epsilon_p$}}{
    \tcc{\small{After finding all the $k$-cliques, we need to recheck whether these cliques have intersection with the $\ell_p$ perturbation ball around the example $x$.}}
        $\hat{U} \leftarrow \hat{U} \cup \{(A, B)\}$
    }
}
return $\hat{U}$\;
}

\Fn{\CheckClique{$B$, $d$, $p$, $\epsilon$}}{
$dist \leftarrow \min_{z \in B} \lVert z - x \rVert_p^p$ using Proposition \ref{thm:point_box_dist}

\uIf{$dist < \epsilon^p $}{
return false
}
return true
}

\caption{Enumerating all $K$-cliques on a $K$-partite graph with $\epsilon_p$, dimension $d$ and example $x$}
\label{alg:clique_enum}
\end{algorithm*}

\section{Proof of Theorem \ref{thm:convex}
}
\label{sec:proof}
\begin{proof}
By definition, we have
\begin{equation*}
\begin{aligned}
    L(\Tilde{D}(\ceil{\epsilon}, d)) &=  L(\min(\Tilde{D}_L(\ceil{\epsilon},\!d), \Tilde{D}_R(\ceil{\epsilon},\!d)))\\
    &= \max (L(\Tilde{D}_L(\ceil{\epsilon},\!d)), L(\Tilde{D}_R(\ceil{\epsilon},\!d))).
\end{aligned}
\end{equation*}
Exponential loss $L$ is convex and monotonically increasing; $L(\Tilde{D}_L(\ceil{\epsilon},\!d))$ and $L(\Tilde{D}_R(\ceil{\epsilon},\!d))$ are both jointly convex in $w_l, w_r$. Note that the dynamic programming related terms become constants after they are computed, so they are irrelevant to $w_l, w_r$. Therefore, $L(\Tilde{D}(\ceil{\epsilon}, d))$ and further $\sum_{i = 0}^{N - 1} L(\Tilde{D}(\ceil{\epsilon}, d))$ are jointly convex in $w_l, w_r$.
\end{proof}

\vspace{-15pt}
\begin{table*}[htb]
\begin{center}
\scalebox{0.85}{
\setlength\tabcolsep{1.5pt}
\begin{tabular}{c?c?c?c}
\hline
    
    \multirow{2}*{Dataset}&\multirow{2}*{ensemble stumps lr. }&\multirow{2}*{ensemble trees lr.}&{$\ell_1$ training}\\
    & & & ensemble trees sample size\\ \hline
    
    breast-cancer  &0.4&-&-\\
    diabetes &0.4&-&-\\
    {Fashion-MNIST shoes} & 0.4& 1.0 & 5000 \\
    {MNIST 1 vs. 5} &  0.4 & 1.0 &5000\\
    
    {MNIST 2 vs. 6}  & 0.4 &1.0&5000\\
    \hline

\end{tabular}
}
\caption{
\textbf{Detail settings of the experiments}. Here we report the learning rate of different training methods for ensemble stumps and trees. We also report the sample size in experiments for ensemble tree training and the scheduling length in $\ell_p$ robust training for ensemble stumps.
} 
\label{tab:exp_parameters}
\vspace{-5pt}
\end{center}
\end{table*}

\subsection{Detail settings of the experiments}
\label{apd:settings}
Here we report the detail settings of our experiments in Table \ref{tab:exp_parameters}. For most of the experiments, we follow the learning rate settings in \cite{andriushchenko2019provably}. For $\epsilon$ scheduling length, we empirically set to the best value near $\epsilon_p/\epsilon_{std}$ for each dataset and $\epsilon$ settings (e.g., for $\ell_1$ norm training, the best schedule length is among 2, 3 and 4 epochs for $\epsilon_1 = 1.0$ and $\epsilon_{std} = 0.3$). Here the $\epsilon_{std}$ is $\epsilon_\infty$ used in \cite{andriushchenko2019provably}. 
For each dataset, different methods are trained with the same group of parameters.

For $\ell_1$ robust training for ensemble trees, we use a subsample of training datasets to reduce training time. 
On Fashion-MNIST shoes, MNIST 1 vs. 5 and MNIST 2 vs. 6 datasets, we subsample 5000 images of the selected classes from the original dataset. For $\ell_2$ robust training, we subsample 1000 images of the selected classes from the original dataset.

\subsection{$\ell_{\infty}$ vs. $\ell_p$ robust training}
For a binary classifier $y = \text{sgn}(F(x))$, and a fixed $\epsilon$, we have $\min_{\|\delta\|_p \leq \epsilon} yF(x + \delta) \geq \min_{\|\delta\|_{\infty} \leq \epsilon} yF(x + \delta)$. Therefore, the exact $\ell_{\infty}$ robust loss can be a natural upper bound of $\ell_p$ robust loss. This explains the close result from $\ell_{\infty}$ and $\ell_p$ robust training, when using the same $\epsilon$. However, this $\ell_{\infty}$ upper bound tends to hurt the clean accuracy , which we can see from Table $\ref{tab:stump_training}$. Additionally, unlike $\ell_1$ or $\ell_2$ norms, it is impossible to set this $\ell_\infty$ perturbation to a large value (e.g., $\epsilon_\infty=1.0$).
\section{Additional experiment results}
\subsection{Comparison of $\ell_\infty$ robustness}
\label{apd:ell_infty_robustness}
In this section, we report the $\ell_\infty$ verified errors of models in Table \ref{tab:stump_training}. For each model in the table, we verify the models using $\ell_\infty$ robustness verification of decision stumps~\citep{andriushchenko2019provably} with perturbation norm $\epsilon_\infty$. In general, \citet{andriushchenko2019provably} produces better $\ell_\infty$ norm verification error because it is designed for that case, but when training using our $\ell_1$ robust training procedure with a larger $\ell_1$, models also get relatively good $\ell_\infty$ robustness.
Note that here we train different number of stumps for different $\epsilon_1$(e.g. For MNIST dataset, we train 20 stumps for $\epsilon_1 = 0.3$ and 40 stumps for $\epsilon_1 = 1.0$).
And for a fixed $\epsilon$, we train the $\ell_\infty$ robust model with the same number of stumps with other methods when making comparisons.

\begin{table*}[htb]
\begin{center}
\scalebox{0.90}{
\setlength\tabcolsep{1.5pt}
\begin{tabular}{c|c|c?c?c?c}
\hline
    
    \multicolumn{3}{c?}{Dataset}&{standard training ~}&{$\ell_{\infty}$ training}&{$\ell_1$ training}\\
    name&$\epsilon_\infty$&$\epsilon_1$& $\ell_\infty$ verified err. & $\ell_\infty$ verified err. & $\ell_\infty$ verified err. \\\hline
    
    breast-cancer & 0.3 &1.0&88.32\%&10.94\%&17.51\%\\\hline
    diabetes & 0.05 &0.05&42.85\%&35.06\%&31.81\%\\\hline
    \multirow{2}*{Fashion-MNIST shoes}& 0.1 & 0.1 &69.85\%&11.35\%&11.75\%\\
    &0.2& 0.5& 98.85\% & 19.30\% & 27.60\% \\\hline
    \multirow{2}*{MNIST 1 vs. 5} & 0.3 & 0.3 &67.09\%&4.09\%&4.05\%\\
    & 0.3 & 1.0 & 66.20\% & 3.60\% & 11.59\% \\\hline
    
    \multirow{2}*{MNIST 2 vs. 6} & 0.3 & 0.3 &97.74\%&8.63\%&9.10\%\\
    & 0.3& 1.0 & 100.0\% & 8.69\% & 15.28\% \\
    \hline

\end{tabular}
}
\caption{
\textbf{
$\ell_\infty$ robustness of ensemble decision stumps.}
This table reports the $\ell_\infty$ robustness for the same set of models in Table \ref{tab:stump_training}. For each dataset, we evaluate standard models, the $\ell_\infty$ robust models trained using \citep{andriushchenko2019provably} with perturbation norm $\epsilon_\infty$, and our $\ell_p$ robust model with $p = 1$ and perturbation norm $\epsilon_1$. We test the models with $\ell_\infty$ norm perturbation $\epsilon_\infty$. Standard test errors are omitted as they as the same as in Table~\ref{tab:stump_training}.
} 
\label{tab:ell_infty_comparison}
\end{center}
\end{table*}

\subsection{$\ell_2$ robust training}
\label{apd:ell_2_training}
In Section~\ref{sec:exp} we mainly presented results for the $p=1$ setting, however our robust training procedure works for general $\ell_p$ norm. In this section, we show some $\ell_2$ robust training results. For each dataset, we train three models using standard training, $\ell_\infty$ robust training~\cite{andriushchenko2019provably} with $\ell_\infty$ perturbation norm $\epsilon_\infty$, and $\ell_p$ robust training with $p = 2$ and $\ell_2$ perturbation norm $\epsilon_2$. And in Table \ref{tab:ell_2_stump_training} and \ref{tab:ell_2_tree_training}, we report the verification results of these models from $\ell_2$ verification.

\begin{table*}[htbp]
\begin{center}
\scalebox{0.90}{
\setlength\tabcolsep{1.5pt}
\begin{tabular}{c|c|c?c|c?c|c?c|c}
\hline
    
    \multicolumn{3}{c?}{Dataset}&\multicolumn{2}{c?}{standard training ~}&\multicolumn{2}{c?}{$\ell_{\infty}$ training}&\multicolumn{2}{c}{$\ell_2$ training}\\
    name&$\epsilon_\infty$&$\epsilon_2$&standard err.& $\ell_2$ verified err. &standard err.& $\ell_2$ verified err. &standard err.& $\ell_2$ verified err. \\\hline
    
    breast-cancer & 0.3 &0.7&0.73\%&97.08\%&4.37\%&99.27\%&8.76\%&\textbf{39.42\%}\\\hline
    {Fashion-MNIST shoes}& 0.2 & 0.4 &5.05\%&69.85\%&9.25\%&81.05\%&14.55\%&\textbf{49.55\%}\\\hline
    {MNIST 1 vs. 5} & 0.3 & 0.8 &0.59\%&67.09\%&1.33\%&66.45\%&4.44\%&\textbf{36.56\%}\\\hline
    
    {MNIST 2 vs. 6} & 0.3 & 0.8 &2.81\%&97.74\%&3.91\%&85.52\%&13.67\%&\textbf{76.98\%}\\
    \hline

\end{tabular}
}
\caption{
\textbf{
$\ell_2$ robust training for ensemble stumps}
In this table, we train the model with $p = 2$ and compare the results with $\ell_\infty$ trained models. For each dataset, we train three models using standard training, $\ell_\infty$ norm robust training with $\epsilon_\infty$ and $\ell_2$ norm robust training with $\epsilon_2$. And we test and compare the $\ell_2$ robustness of these models using $\ell_2$ robust verification.
} 
\label{tab:ell_2_stump_training}
\end{center}
\end{table*}

\begin{table*}[htbp]
\begin{center}
\scalebox{0.85}{
\setlength\tabcolsep{1.5pt}
\begin{tabular}{c|c|c|c|c?c|c?c|c?c|c}
\hline
    
    \multicolumn{5}{c?}{Dataset}&\multicolumn{2}{c?}{standard training ~}&\multicolumn{2}{c?}{$\ell_{\infty}$ training~\citep{andriushchenko2019provably}}&\multicolumn{2}{c}{$\ell_2$ training (ours)}\\
    name&$\epsilon_\infty$&$\epsilon_2$&n. trees & depth & standard err.&verified err. &standard err.& verified err. & standard err. & verified err. \\\hline
    
    {Fashion-MNIST shoes} 
    &0.2&0.4&3&5&8.05\%&99.40\%&7.65\%&93.49\%&17.36\%&\bf 68.23\%\\\hline
    {breast-cancer} 
    & 0.3&0.8 & 5 & 5 &  1.47\% & 97.06\% & 1.47\% & 97.79\% & 12.50\% &\bf  55.88\% \\\hline
    {MNIST 1 vs. 5} 
    &0.3&0.8&3&5&2.37\%&100.0\%&2.12\%&97.72\%&23.25\%&\bf 50.54\%\\\hline
    
    {MNIST 2 vs. 6} 
    &0.3 & 0.8&3&5 &3.82\%&100.0\%&3.12\%&100.0\%&19.80\%&\bf 93.56\%\\
    \hline

\end{tabular}
}
\caption{
\textbf{$\ell_2$ robust training for tree ensembles.} We report standard and $\ell_2$ robust test error for all the three methods. We also report $\epsilon_\infty$ and $\epsilon_2$  for each dataset, and the number of trees in each ensemble. 
} 
\label{tab:ell_2_tree_training}
\vspace{-5pt}
\end{center}
\end{table*}

\end{document}